\def\BibTeX{{\rm B\kern-.05em{\sc i\kern-.025em b}\kern-.08em
    T\kern-.1667em\lower.7ex\hbox{E}\kern-.125emX}}
\newcolumntype{M}{>{$}l<{$}} % math-mode version of "l" column type
\DeclareMathOperator*{\argmax}{arg\,max}
\newcommand{\KL}{\textrm{KL}}
\newtheorem{lemma}{Lemma}
\pgfplotsset{compat=newest}
\pgfplotsset{every axis legend/.append style={%
cells={anchor=west}}
}
\pgfplotsset{every axis/.append style={
                    label style={font=\footnotesize},
					tick label style={font=\footnotesize},
					legend style={font=\footnotesize}
                    }}
\begin{document}

\title{Model Based Residual Policy Learning with Applications to Antenna Control
}

\author{%
 \IEEEauthorblockN{Viktor Eriksson Möllerstedt} 
 \IEEEauthorblockA{\textit{was with KTH Royal Institute of Technology}
 \\ viktor.mollerstedt@hotmail.com
 }
\and
 \IEEEauthorblockN{Alessio Russo} 
 \IEEEauthorblockA{\textit{Division of Decision and Control Systems} \\ \textit{KTH Royal Institute of Technology} 
 \\ alessior@kth.se
 }
 \and
  \IEEEauthorblockN{Maxime Bouton} 
 \IEEEauthorblockA{\textit{Ericsson Research} 
 \\ maxime.bouton@ericsson.com
 }
}

\maketitle

\begin{abstract}%
% State the problem, your approach and solution, and the main contributions of the paper. Include little if any background and motivation. Be factual but comprehensive. The material in the abstract should not be repeated later word for word in the paper.
Non-differentiable controllers and rule-based policies are widely used for controlling real systems such as telecommunication networks and robots. 
Specifically, parameters of mobile network base station antennas can be dynamically configured by these policies to improve users coverage and quality of service.
Motivated by the antenna tilt control problem, we introduce Model-Based Residual Policy Learning (MBRPL), a practical reinforcement learning (RL) method. MBRPL enhances existing policies through a model-based approach, leading to improved sample efficiency and a decreased number of interactions with the actual environment when compared to off-the-shelf RL methods.
To the best of our knowledge, this is the first paper that examines a model-based approach for antenna control. 
Experimental results reveal that our method delivers strong initial performance while improving sample efficiency over previous RL methods, which is one step towards deploying these algorithms in real networks. 
\end{abstract}

\begin{IEEEkeywords}%
  model-based reinforcement learning; sample efficiency; mobile networks; antenna tuning.%
\end{IEEEkeywords}

\section{Introduction}\label{sec:introduction}
% \todo[inline]{add link to Arxiv paper and just one sentence mentioning mujoco experiments.}
% \todo[inline]{emphasize the related work from antenna tilt literature, the main differences should be: use of existing policy, use of model-based approach, formulation as a continuous control problem}

With the increase in complexity of mobile networks from generation to generation, there has been a growing interest in data-driven method to tune configuration parameters. Networks consist of base station antennas with many configuration parameters that are traditionally configured manually by skilled engineers or using hand-engineered rule-based policies designed to improve network key performance indicators (KPIs) such as coverage, signal quality, or capacity~\cite{mendo2023multi, ai_empow}. 
In contrast, data-driven methods are expected to scale better and adapt to different network conditions.
Reinforcement learning (RL) is one popular and flexible method for automatically learning to tune such parameters from data. 
However, training an RL agent requires a lot of data, which usually involves sampling from an environment. In addition, agents initially tend to have poor performance for multiple iterations before learning useful behaviors. In telecommunications networks, sampling data from the network can be costly, time-consuming, and excessively risky~\cite{rl_challenges}. Addressing sample efficiency, and the poor initial performance of the RL agents, is necessary in order to facilitate their deployment in real networks. 

In this paper, we focus on the problem of tuning base stations' antenna parameters such as the tilt angle, where RL methods have already shown to outperform legacy solutions~\cite{mendo2023multi, ai_empow, safe_tilt}. Previous works have addressed the problem of antenna tilt control with RL by using standard algorithms, such as DQN \cite{dqn}, or focused on the multi-agent aspect~\cite{ai_empow, coord_RL, dandanov2017} without considering sample efficiency. Sample efficiency is important as training RL agents even in simulation can be expensive, and the current number of samples needed to reach a good performance makes online training impractical.

There exist several approaches to address sample efficiency in RL. A practical approach is to use a model of the environment to generate training data for the agent, thus lowering the need for sampling data points from the real environment. Model-based approaches, such as MBPO \cite{MBPO}, and Dreamer \cite{dreamer}, learn such a model using real environment data and use it to generate extra training data for a model-free RL algorithm. These methods have shown impressive gains in sample efficiency, but just as many other RL methods, they suffer from poor initial performance because the agent starts by exploring the environment with random actions. Other methods leverage a baseline policy during training \cite{RPL, dqfd}, which can lead to strong initial performances and, in some cases, an increase in sample efficiency. In that spirit, \textit{Residual Policy Learning} (RPL) \cite{RPL} consists of learning a correction term to a deterministic baseline policy, which does not need to be differentiable.

Another body of literature has investigated safe reinforcement learning approaches for tilting antennas \cite{symbolic_safe_rl_tilt, safety_shield_tilt}. For instance, \citeauthor{safe_tilt} used a rule-based policy as a behavioral policy to gather data from the environment~\cite{safe_tilt}. This data was then used to learn a greedy policy using an off-policy algorithm.
Even though these methods make the training process safer, they  do not necessarily increase the sample efficiency. For antenna tuning problems, sample efficiency is of core importance to enable learning in the real world, but also of practical importance when learning in simulation, since network simulation usually involves expensive calculations. The algorithm proposed in this paper specifically targets this aspect, along with the initial performance of the agent, and can be combined with previous works that consider safety and multi-agent coordination.

With the goal of further increasing sample efficiency while maintaining strong initial performance, we expand upon ideas from both model-based methods and baseline policy techniques. Specifically, we propose a practical model-based RL algorithm that can augment existing hand-engineered policies~\cite{mendo2023multi}, or safe baselines~\cite{safe_tilt} for tuning antenna down tilt in a telecommunication network. However, relying on baselines has drawbacks, mostly due to biased data caused by limited exploration~\cite{safe_tilt, safety_shield_tilt}. By introducing a model-based component, the dataset can be augmented with trajectories generated from an agent that is actively interacting and exploring within the model. This approach seeks to mitigate the bias, allowing for more robust evaluation and potentially enhancing the overall effectiveness of the system.

% In this work, we present a novel method which uses a model-based approach to learn a correction term to a baseline policy. We show how the RPL idea can be extended to stochastic policies, and that we get cumulative gain by combining it with model-based methods when applied to controlling antenna tilt. A theoretical analysis of the algorithm performance bound is also provided. 
%We introduce the problem of controlling antennas in a telecommunication network as a reinforcement learning problem with continuous actions (discrete actions were used in previous works). 
%We empirically show in simulated mobile networks that our proposed algorithm outperforms the state-of-the-art (DQN~\cite{mendo2023multi}) on this problem in terms of sample efficiency and initial performance, which makes it a more practical alternative for real world deployment. 
%An ablation study is performed which highlight the contributions of both, our extension to the residual learning concept, and the model-based component. 
%The proofs of the presented lemma, the list of simulation parameter and algorithm hyperparameters along with additional experiments of our methods on other problems (MuJoCo) can be found in our technical report {\tt \url{https://arxiv.org/abs/2211.08796}}.

Contribution-wise, in this work we introduce the problem of controlling antennas in a telecommunication network as an RL problem with continuous actions (discrete actions were used in previous works). 
We present a novel method using a model-based approach to learn a correction term to a baseline policy, extending the RPL idea to stochastic policies, and demonstrate cumulative gain by combining it with model-based methods when applied to controlling antenna tilt.
We provide a theoretical analysis of the algorithm's performance bound and empirically show in simulated mobile networks that our algorithm outperforms the state-of-the-art (DQN~\cite{mendo2023multi}) in terms of sample efficiency and initial performance, making it a more practical alternative for real-world deployment in controlling antennas in a telecommunication network.
Finally, an ablation study highlights the contributions of both our extension to the residual learning concept and the model-based component. Additional details, including proofs of the presented lemmas, simulation parameters, and further experiments, can be found in our technical report {\tt \url{https://arxiv.org/abs/2211.08796}}.

\section{Background}\label{sec:background}

\subsection{Markov decision process}

We model the problem as a Markov Decision Process (MDP): $(\mathcal{S, A}, r, p, \rho_0)$. Here, $\mathcal{S}$ is the state-space, $\mathcal{A}$ the action-space, $r:\mathcal{S}\times\mathcal{A}\times\mathcal{S}\to\mathbb{R}$ the reward function, $p: \mathcal{S}\times \mathcal{A}\to\Delta(\mathcal{S})$ the transition probability function (also known as transition dynamics;  $\Delta(\mathcal{S})$ is the space of probability distributions with support $\mathcal{S}$), and $\rho_0(s)$  is the initial state probability distribution. At step $t$, the agent observes the current state $s_t$ of the system, and selects $a_t$ according to a stationary Markov policy $\pi:\mathcal{S}\to\Delta(\mathcal{A})$. The goal of the agent is to find a policy $\pi$ that maximizes the total discounted reward collected from the environment. For a discount factor $\gamma \in (0,1)$,  we define the discounted value of $\pi$  as
$
    V^\pi(\rho_0) = \mathbb{E}_{s_0 \sim \rho_0} \left[\sum_{t=0}^{\infty} \gamma^t r(s_t, a_t, s_{t+1})  \right]$, where $s_{t+1}\sim p(\cdot|s_t,a_t)$ and $a_t\sim \pi(\cdot|s_t)$.

\subsection{Residual policy learning}

RPL consists in learning a correction term $\pi_c$ to a baseline policy $\pi_b$ (refer to ~\cite{RPL} for more details). The baseline policy $\pi_b$ does not need to be differentiable. This baseline policy can represent prior knowledge in the form of an existing controller. The baseline may come from a hand-engineered method, a control theoretic approach or result from an RL agent trained under different conditions. 

\subsection{Model-based RL} 
In RL problems the transition function is usually unknown, and in model-based RL it is explicitly learned during training by learning a parameter $\theta$ of  a model $p_\theta$ such that $p_{\theta} \approx p$.
Specifically, for a given buffer of experiences $(s_t,a_t,r_t,s_{t+1})\in\mathcal{B}$, a model $p_\theta$ is usually learned by maximizing the log-likelihood of the data so that $\theta \gets \argmax_\theta \mathbb{E}_{(s,a,r,s')\sim \mathcal{B}}\left[\log \mathcal{L}(s,a,r,s';\theta)\right],$ where $\mathcal{L}$ is the likelihood function. Using this learned model, it is possible to use model-free methods to learn a policy $\pi$ using data sampled from $p_\theta$ (for  more details, see also \cite{MBPO}). The benefit of this approach is that the user can significantly reduce the number of experiences sampled from the true environment, which may be challenging or costly to acquire in certain scenarios. Finally, note that  the reward function is often assumed to be known, and other times it must be learned along with the transition dynamics. In this work, we consider both cases.
\section{Problem Formulation}\label{sec:problem}
Mobile telecommunication networks are composed of a number of \textit{base stations} to which one or several antennas are mounted. The antennas transfer data to and from several users, such as cellphones and computers. A user decides which antenna to attach to based on signal strength. Users attached to the same antenna form a \textit{cell}. In the coverage and capacity optimization problem, the goal is to control the tilt of the antenna such that all users have good coverage, good signal quality, and that many users can send and receive data at the same time. When adjusting the parameters of the antennas, these  quantities will be affected, and the down-tilt angle $w$ is the one of the most influential parameters \cite{self_tuning_tilt, safe_tilt, opt_cco_ml}. \cref{fig:antenna_env} (right) illustrates its influence on coverage and signal quality.

\begin{figure}[t]
    \centering
    \input{figures/antenna_env}
    \includegraphics[width=0.35\columnwidth]{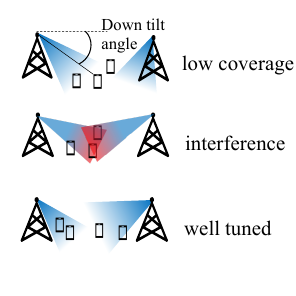}
    \caption{\textbf{Left:} aerial view of the simulated network. 3 antennas are attached to each base station, pointing in different directions. \textbf{Right:} illustration of the effect of changing the tilt angle on coverage and quality.}
    \label{fig:antenna_env}
\end{figure}

\subsection{System Model}\label{sec:model}

A common way to measure coverage is via the Reference Signal Received Power (RSRP), which is the power of the signal received by a user attached to a cell \cite{opt_cco_ml, ai_empow}. We denote the RSRP of user $u$ attached to cell $c$ as $\rho_c^u$.  The RSRP is a function of the transmitted power from the antenna $P_c$, the gain of the antenna $G_c^u$, and path loss $L_c^u$: $\rho_c^u = P_c G_c^u L_c^u$. The gain is a function of the down-tilt angle $w$, and the path-loss depends on obstacles (such as buildings and trees) and the medium of transmission. 
To model the relation between tilt angle and the antenna gain, we use a standardized horizontal vertical radiation pattern according to 3 GPP case 1 and 3~\cite{3gpp.36.814}. 

Since RSRP is a function of $w$ and the users decide which cell to attach to based on the RSRP, the down-tilt angle affects how many users are attached to a cell. There are a number of ways to use RSRP to measure the coverage of the entire cell. We used the average log RSRP across users attached to that cell: $COV_c = \frac{1}{|U_c|} \sum_{u \in U_c} \log \rho_c^u$, where $U_c$ is the set of indices of users in cell $c$. 

 The signal quality can be measured via the Signal to Interference and Noise Ratio (SINR). The SINR $\gamma$ of user $u$ attached to cell $c$ can be defined as the corresponding RSRP value, divided by a noise term plus the RSRP from all other cells: $\gamma_c^u = \frac{\rho_c^u}{\kappa + \sum_{i \in C \backslash c } \rho_i^u }$.
Here, $\kappa$ is the noise term and $C$ the set of cell indices. We measured the quality in a cell as the average log SINR over users: $QUAL_c = \frac{1}{|U_c|} \sum_{u \in U_c} \log \gamma_c^u$. \\

The throughput for user $u$ attached to cell $c$ can be defined as: $T_c^u = \frac{\omega_B n_B}{|U_c|} \log_2(1 + \gamma_c^u)$.
Here, $\omega_B$ is the bandwidth per physical resource block, and it is assumed that each user is assigned the same number of physical resource blocks $n_B$. We measure the capacity for a cell with the average log throughput across users: $CAP_c = \frac{1}{|U|_c} \sum_{u \in U_c} \log T_c^u$. \\

% We defined the action space as $\mathcal{A} = \{ \Delta w_c \mid \Delta w_c \in [-1^\circ, 1^\circ] \}$. The state-space was defined as $\mathcal{S} = \{(w_c, COV_c, CAP_c, QUAL_c )_{c=1}^C \mid w_c \in [0^\circ, 15^\circ] \}$. We defined the reward function as $r(s_t,a_t, s_{t+1}) = COV_{c, t+1} + CAP_{c,t+1} + QUAL_{c,t+1} $. The true transition probabilities $p(s' \mid s, a)$ were unknown and modeled by a proprietary network simulator \cite{axcel}. \\

\subsection{MDP Formulation}

The antenna tilt problem can be modeled as an MDP where each antenna is controlled by one agent. It is then inherently a multi-agent problem, since there are many antennas interacting in the same environment. 
We approached the problem of controlling multiple antennas via parameter sharing. 
Our MDP formulation is a single-agent formulation where a shared policy and model is trained to represent the environment from a point of view of a single antenna (not a global policy and model). They are still trained in an environment with multiple antennas and the data generated by all the agents is used for training the model and policy. In other words, one step in the environment provides $n$ transition samples where $n$ is the number of agents. This is equivalent to saying that all antennas are represented with a common MDP for each antenna and are controlled with the same policy.
Other lines of work address the problem of coordinating antennas ~\cite{coord_RL}, and combining those approaches with our algorithm is left for future work. The MDP is formulated as follows from the point of view of a single antenna. 

\paragraph{Observation space} The agent observes its current tilt angle $w$, and the current value of the key performance indicators (KPIs): coverage, capacity and quality in its corresponding cell. The dimensionality of the observation space is $4$. 

\paragraph{Action space} The agent outputs one continuous change of tilt-angle $(\Delta w)_i \in  [\SI{-1}{\degree}, \SI{1}{\degree}]$ for each antenna $i$. The tilt-angle  $w_i$ for the $i$-th antenna is limited to lie between $\SI{0}{\degree}$ and $\SI{15}{\degree}$. The change in tilt angle affects the KPIs according to the model described in \ref{sec:problem}.

\paragraph{Reward function} The reward function at time $t$ is a sum of the coverage, capacity and quality at time $t+1$. For a cell $c$ we have: $r_{c,t} = COV_{c,t+1} + QUAL_{c,t+1} + CAP_{c,t+1}$. 

All three metrics were normalized to have mean $\mu \approx 0$ and standard deviation $\sigma \approx 1$. We had to limit the standard deviation further for some of the metrics to prevent outlier data. The mean and standard deviation were measured empirically by running simulations with a random policy prior to training the agent. As explained in \cref{sec:model}, we use the logarithmic of the geometric mean to compute the KPIs in order to provide some notion of fairness in the reward function. It discourages giving a few users very bad KPI values in order to increase the majorities values.\\

\section{Method}\label{sec:method}
We are interested in solving the problem by reducing the number of interactions needed with the actual environment, which is a common requirement in many applications. To that aim, we propose Model-Based Residual Policy Learning (MBRPL), which improves on the residual-policy learning concept  by considering a model-based approach. Firstly, we incorporate prior knowledge through a baseline policy to achieve strong initial performance and to guide the training in a sound direction. Secondly, we use a model-based approach for training the correction term to reduce the number of samples needed from the true environment.
%In the following sections, we first explain how the policies are combined, and later explain the training process of the method.

%the training process and the how the baseline policy is used when selecting actions. Next, we explain how the residual is trained. 

% \begin{figure}
%      \centering
%      \includegraphics[scale=0.5]{}
%      \caption{An overview of the training process of Model-based Residual Policy Learning.}
%      \label{fig:mbrpl}
% \end{figure}

\subsection{Stochastic Residual Policy Learning}
As explained in \cref{sec:background}, Residual Policy Learning  combines a baseline policy $\pi_b$ with a correction term $\pi_c$. The baseline policy does not need to be differentiable, and can be of any form. \citeauthor{RPL} \cite{RPL} consider deterministic policies, whereas we focus on stochastic policies (such as PPO \cite{schulman2017proximal} or SAC \cite{SAC_v1})  that make use of an actor-critic training procedure. In fact, these algorithms have empirically shown to lead to more stable training than deterministic policies trained, for example, using DDPG.
Combining a stochastic policy $\pi_c$ with a baseline policy $\pi_b$, which does not necessarily need to be stochastic, can be done in different ways depending on the problem of interest. In this work, we  focus on problems with continuous action spaces, and therefore at step $t$ the action chosen by the agent can be represented as $a_t = f(a_t^c, a_t^b)$, where $(a_t^c, a_t^b)$ are, respectively, the actions chosen by the correction term and the baseline term at time $t$. The function $f$ combines the two actions and can be customized.

For example, assuming that the correction term is represented by a Gaussian distribution of parameters $\phi=(\mu,\sigma)$. If the baseline term is deterministic, and $f(x,y)=x+y$, then the overall policy at time $t$ can be expressed as
$
    \pi_\phi(\cdot|s_t) = \mathcal{N}\left(\mu(s_t) + a_t^b, \sigma^2(s_t)\right),
$
where the parameters $(\mu,\sigma)$ are learned online using classical policy learning methods, such as SAC.
If the baseline policy also represents a Gaussian distribution with parameters $\mathcal{N}(\mu_b,\sigma_b^2)$, independent of the correction term, then we simply derive
$
    \pi_\phi(\cdot|s_t) = \mathcal{N}\left(\mu_b(s_t) + \mu(s_t), \sigma_b^2(s_t) + \sigma^2(s_t)\right).
$ The policy is initialized to closely follow the baseline $\pi_b$. For instance, if the correction term is a neural network, we initialize the weights of the last layer to be approximately $0$. This initialization leads to a stronger initial performance. However, the random initialization of the critic can still create an initial performance drop, since the critic guides the training of the actor in the "wrong" direction. To overcome this problem, just as in RPL, we let the critic train while keeping the policy unchanged during the initial phase. The number of training steps during which the critic trains with a constant policy is denoted as Critic Burn-In (CBI), represented by the parameter $B_{in}$.
In summary, this approach extends the existing RPL algorithm to support training stochastic policies and use more recent model-free RL algorithms such as SAC to learn the policy residual. 

\subsection{Model-Based Residual Policy Learning (MBRPL)}

\begin{algorithm}[t]
\caption{Model-Based Residual Policy Learning (MBRPL)}
\begin{algorithmic}[1]\label{alg:mbrpl}
    \REQUIRE Baseline policy $\pi_b$; critic burn-in period $B_{in}$.
    \smallskip
    \STATE  Initialize model $p_{\theta}$;   critic $Q_{\psi}$ and replay buffer $\mathcal{B}$.
    \STATE  Initialize combined policy $\pi_\phi$, where $\phi$ is the parameter of the correction term.
    %\STATE  Define the combined policy $\pi_{\phi}^b(\cdot|s_t) = \mathcal{N}\left(\mu_{\phi}(s_t) + \mu_b(s_t), 
    %\sigma_{\phi}^2(s_t)\right)$ 
    %\STATE Total steps $k = 0$
    \FOR{$t=1,\dots,T$}
        \STATE Sample experiences $(s_t,a_t,r_t,s_{t+1}\dots)$ from the true environment using $\pi_{\phi}$ and add them to the buffer $\mathcal{B}$.
        \STATE Train $p_\theta$ on a batch $B$ sampled from $\mathcal{B}$ using maximum likelihood.
        \STATE Sample multiple experiences $(\mathbf{s}_\delta,\mathbf{a}_\delta,\mathbf{r}_\delta,\mathbf{s}_{\delta+1})$ from $\mathcal{B}$.
            % \STATE For each $s\in B$, generate a trajectory $\tau$ of length $H$ starting from state $s$ using the trained model $p_\theta$ and policy $\pi_\phi$.
            % \STATE Compute  the critic loss on sampled trajectories and update the parameter $\psi$ using gradient descent.
            % \IF{total number of steps $> B_{in}$ }
            %     \STATE Compute the actor loss on sampled trajectories and update the parameter $\phi$ using gradient descent.
            % \ENDIF
            
        \FOR{$\tau=\delta,\dots,\delta+H$} 
            \STATE $\mathbf{s}_{\tau + 1} \sim p_{\theta}(\mathbf{s}' \mid  \mathbf{s}_{\tau}, \pi_{\phi}(\mathbf{s}_{\tau}))$ \algorithmiccomment{Predict next state batch}
            \STATE $\mathbf{r}_{\tau} = r(\mathbf{s}_{\tau}, \mathbf{a}_{\tau}, \mathbf{s}_{\tau + 1})$
            \STATE Compute critic loss on $(\mathbf{s}_{\tau}, \mathbf{a}_{\tau}, r_\tau, \mathbf{s}_{\tau + 1})$ and update $\psi$ using gradient descent. 
            \IF{$t> B_{in}$  \{CBI condition\}}
                \STATE Compute the actor loss on $(\mathbf{s}_{\tau}, \mathbf{a}_{\tau}, r_\tau, \mathbf{s}_{\tau + 1})$ and update the parameter $\phi$ using gradient descent.
            \ENDIF
            
        \ENDFOR
    % \FOR{each episode}
    %     \FOR{each time step}
    %         \STATE Sample action $a \sim \mathcal{N}( \mu_b(s) + \mu_{\phi}(s), \sigma_b(s) + \sigma_{\phi}(s)) $
    %         \STATE  Take action  environment step, add data to $\mathcal{B}$
    %         % \BlankLine
    %         \STATE  Train $p_{\theta}$ on data from buffer 
    %         % \BlankLine
    %         \STATE  Use $p_{\theta}$, $r$, $\pi$ to predict future states and rewards 
    %         \STATE  Train $f_{\phi}$ and $Q_{\psi}$ on real and predicted data 
    %     \ENDFOR
    % \ENDFOR
    \ENDFOR
    \end{algorithmic}
\end{algorithm}

Motivated by maximizing sample efficiency, we propose to combine stochastic RPL with a model-based approach. We give a high level description of the algorithm in \cref{alg:mbrpl} (the algorithm describes an on-policy training, but it can be easily adapted to be off-policy). The training method consists of alternating between learning the dynamics, predicting future states and rewards, and training the residual policy and critic on the real and predicted data. 
%Finally, we provide a theoretical performance bound on the performance of the policy, providing new insights on the stochastic residual policy approach.

A function approximator $p_\theta$ is introduced to model the true environment, which is trained by maximizing the likelihood between between the generated data and true data sampled from the environment (which reduces to the classical MSE criterion for Gaussian transition functions). The frequency at which the model is trained and the number of data points used to train it at each step are hyperparameters of the algorithm. This part of the method is similar to existing model-based RL methods~\cite{MBPO}. 

The policy is formed using stochastic residual policy learning, and the correction term is trained using  trajectories generated by the learned model $p_{\theta}$ using off-policy model free methods (however, also on-policy methods can be used). In particular, we use SAC to train the stochastic policy residual. A specificity of our method is that we perform a policy update after each generated trajectory points instead of considering the whole trajectory as a batch. All the hyperparameters and design choices (e.g. model representation) will be discussed in the experiment section and appendix.

\subsection{Theoretical performance}
We now theoretically analyze the performance bound of the learned policy compared to both how close the baseline policy is to the optimal solution and how close the model is to the true transition model. Since the baseline policy is not necessarily created using the same environment $M$ where the corrected policy will operate on, but possibly a different one $M_b$, we analyzed the performance of the corrected policy in $M$ depending on its performance in $M_b$. To that aim, for a discount factor $\gamma$, we denote by $V_M^\pi$ the discounted value of $\pi$ in $M$, and similarly we indicate by $V_{M_b}^\pi$ the discounted value of $\pi$ in $M_b$.

To derive a performance bound, we first consider the following lemma that bounds the performance of a generic policy $\pi$ in two similar environments $M$ and $M_b$, with the same reward, and different transition functions $p_0$ and $p_1$. Assuming that $(p_0,p_1)$ are similar in the Kullback-Leibler sense, we derive the following result (the proof is provided in the appendix).
\begin{lemma}\label{lemma:value_model_difference}
Consider two MDPs $M=({\cal S}, {\cal A}, r, p_1)$ and $M_b=({\cal S}, {\cal A}, r,p_0)$ and a Markov stationary policy $\pi$. Let $r\in [0,1]$, and assume that  $\KL(p_0(s,a),p_1(s,a))=\mathbb{E}_{s'\sim p_0(\cdot|s,a)}\left[\log \frac{p_0(s'|s,a)}{p_1(s'|s,a)}\right]\leq \varepsilon$ for all $(s,a)$. Then $|V_{M_b}^{\pi}(\mu) - V_{M}^{\pi}(\mu)| \leq \sqrt{2\varepsilon}\frac{\gamma}{1-\gamma} \|V_{M_b}^\pi\|_\infty$ for any  distribution $\mu$ of the initial state.
\end{lemma}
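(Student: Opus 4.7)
The plan is a standard simulation-lemma argument: I would use the contraction of the policy-evaluation Bellman operator together with Pinsker's inequality to convert the pointwise hypothesis $\KL(P_0,P_1)\le\varepsilon$ into a bound on expectations of $V_{M_b}^\pi$.

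First, introduce the policy-specific Bellman operators $T$ (using $P_1$) and $T_b$ (using $P_0$), both acting on bounded functions on $\mathcal{S}$. Each is a $\gamma$-contraction in $\|\cdot\|_\infty$, with $V_M^\pi$ and $V_{M_b}^\pi$ as their respective fixed points. Adding and subtracting $T V_{M_b}^\pi$ and using the triangle inequality together with the contraction property yields the classical perturbation estimate
\[
\|V_M^\pi - V_{M_b}^\pi\|_\infty \;\le\; \frac{1}{1-\gamma}\,\|T V_{M_b}^\pi - T_b V_{M_b}^\pi\|_\infty,
\]
which reduces the problem to controlling one step of the operator difference applied to $V_{M_b}^\pi$.

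Next, expand $(T V_{M_b}^\pi - T_b V_{M_b}^\pi)(s) = \mathbb{E}_{a\sim\pi}\bigl[(\mathbb{E}_{P_1}-\mathbb{E}_{P_0})[r(s,a,s') + \gamma V_{M_b}^\pi(s')]\bigr]$. Since the reward is common to both MDPs, the $r$-contribution cancels whenever $r$ depends only on $(s,a)$, leaving $\gamma\,(\mathbb{E}_{P_1}-\mathbb{E}_{P_0})[V_{M_b}^\pi(s')]$. Applying Pinsker's inequality $\|P_0-P_1\|_{\mathrm{TV}} \le \sqrt{\KL(P_0,P_1)/2}$ together with the elementary bound $|(\mathbb{E}_P-\mathbb{E}_Q)h| \le 2\|h\|_\infty\|P-Q\|_{\mathrm{TV}}$ at $h=V_{M_b}^\pi$ gives $|(\mathbb{E}_{P_1}-\mathbb{E}_{P_0})[V_{M_b}^\pi]| \le \sqrt{2\varepsilon}\,\|V_{M_b}^\pi\|_\infty$ pointwise. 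Taking suprema over $(s,a)$ and substituting back into the perturbation estimate produces the claimed bound on $\|V_M^\pi - V_{M_b}^\pi\|_\infty$, which implies the stated inequality when applied at any fixed initial distribution $\mu$.

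The main obstacle I anticipate is the cancellation of the reward term: if $r$ genuinely depends on $s'$, then $(\mathbb{E}_{P_1}-\mathbb{E}_{P_0})[r(s,a,s')]$ contributes an additive term of order $\sqrt{2\varepsilon}/(1-\gamma)$, and one would need either to interpret the reward as state-action only or to absorb this term into the $\|V_{M_b}^\pi\|_\infty$ factor using $r\ge 0$ together with the Bellman relation $V_{M_b}^\pi \ge \mathbb{E}_{P_0}[r]$. A minor technicality is the direction of KL in Pinsker, since the hypothesis is stated as $\KL(P_0,P_1)$ rather than $\KL(P_1,P_0)$; the inequality holds in either direction, but one should confirm that the chosen orientation correctly bounds the signed expectation difference that appears.
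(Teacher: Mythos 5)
Your proof is correct and reaches the stated constant, but it takes a different route from the paper. You argue via the fixed-point perturbation of the policy Bellman operators: contraction gives $\|V_{M}^\pi-V_{M_b}^\pi\|_\infty\le\frac{1}{1-\gamma}\|T V_{M_b}^\pi-T_b V_{M_b}^\pi\|_\infty$, and the one-step difference is controlled by Pinsker, yielding $\frac{\gamma\sqrt{2\varepsilon}}{1-\gamma}\|V_{M_b}^\pi\|_\infty$ directly. The paper instead expands $\Delta V(s)=V_{M_b}^\pi(s)-V_M^\pi(s)$ recursively (add and subtract $V_{M_b}^\pi$ evaluated at the next state sampled from $P_1$) and unrolls the recursion, obtaining $\Delta V(s)=\frac{\gamma}{1-\gamma}\mathbb{E}_{z\sim\mu_1^\pi(s),\,a\sim\pi(z)}\bigl[\mathbb{E}_{P_0(z,a)}[V_{M_b}^\pi]-\mathbb{E}_{P_1(z,a)}[V_{M_b}^\pi]\bigr]$ before applying Pinsker, where $\mu_1^\pi$ is the discounted occupancy measure in $M$. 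The two arguments are equivalent under the uniform KL hypothesis, but they buy slightly different things: your operator-perturbation argument is shorter and more mechanical, while the paper's expansion exhibits the intermediate bound $|\Delta V(s)|\le\sqrt{2}\frac{\gamma}{1-\gamma}\|V_{M_b}^\pi\|_\infty\,\mathbb{E}_{z\sim\mu_1^\pi(s),a\sim\pi(z)}[\sqrt{\KL(P_0(z,a),P_1(z,a))}]$, i.e.\ only the average root-KL along states actually visited in $M$ matters, which is a strictly finer statement. Your worry about the reward term is well placed: the paper defines $r$ on $\mathcal{S}\times\mathcal{A}\times\mathcal{S}$, yet its own proof silently drops the immediate-reward difference in the very first line, so the clean bound with only the $\gamma\|V_{M_b}^\pi\|_\infty$ factor implicitly assumes the expected immediate reward does not differ across $P_0$ and $P_1$ (e.g.\ $r=r(s,a)$); your proposed resolution matches that implicit assumption, and if $r$ genuinely depends on $s'$ both proofs would pick up the same additional $O(\sqrt{2\varepsilon}/(1-\gamma))$ term you identify.
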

Given this performance bound, the idea is to evaluate the performance of the corrected policy $\pi_{\phi}$ knowing the performance of the baseline term $\pi_{b}$ in the environment $M_b$ in which it was trained. Similarly as before, assuming that the correction policy and the baseline policy are close in the KL-sense, we derive the following.
\begin{lemma}\label{lemma:performance_bound}
Consider two MDPs $M=({\cal S}, {\cal A},r,p)$ and $M_b=({\cal S}, {\cal A},,r,p_0)$, with $r\in[0,1]$, that satisfy $\KL(p_0(s,a),p(s,a))\leq \varepsilon_0$ for all $(s,a)\in S\times A$. Let $\pi_b$ be a Markov policy trained on  $M_b$, and let its average total discounted reward be $V_{M_b}^{\pi_b}(\mu)$, for some initial distribution of the state $\mu$ and discount factor $\gamma$. Assume that $\max_s \KL(\pi_\phi(s),\pi_b(s))\leq \varepsilon_\pi$. Then
\begin{equation}
V_{M}^{\pi_\phi}(\mu) \geq 
 V_{M_b}^{\pi_b}(\mu)- \frac{\sqrt{2}}{1-\gamma}\left(  \frac{\sqrt{\varepsilon_\pi}}{(1-\gamma)} + \gamma\sqrt{\varepsilon_0} \|V_{M_b}^{\pi_b}\|_\infty\right)
\end{equation}
\end{lemma}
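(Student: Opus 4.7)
The plan is to prove the bound by a standard triangle-style decomposition through an intermediate quantity. I introduce $V_{M,\gamma}^{\pi_b}(\mu)$ and write
\[
V_{M,\gamma}^{\pi_\phi^b}(\mu) - V_{M_b,\gamma}^{\pi_b}(\mu) = \underbrace{\bigl(V_{M,\gamma}^{\pi_\phi^b}(\mu) - V_{M,\gamma}^{\pi_b}(\mu)\bigr)}_{\text{policy mismatch in }M} + \underbrace{\bigl(V_{M,\gamma}^{\pi_b}(\mu) - V_{M_b,\gamma}^{\pi_b}(\mu)\bigr)}_{\text{model mismatch for }\pi_b},
\]
so it suffices to upper bound each absolute difference separately and then apply the reverse triangle inequality to get the stated lower bound.

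The model-mismatch term is handled by directly invoking \cref{lemma:value_model_difference} with the policy $\pi = \pi_b$ and the two transition kernels $P_0, P$, whose KL divergence is bounded by $\varepsilon_0$ by assumption. This immediately gives
\[
\bigl|V_{M,\gamma}^{\pi_b}(\mu) - V_{M_b,\gamma}^{\pi_b}(\mu)\bigr| \le \sqrt{2\varepsilon_0}\,\frac{\gamma}{1-\gamma}\,\|V_{M_b,\gamma}^{\pi_b}\|_\infty,
\]
which is exactly the second term inside the parentheses on the right-hand side of the claim.

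The policy-mismatch term is the main obstacle and will need a separate argument. The plan is to use the Kakade–Langford performance difference lemma to write
\[
V_{M,\gamma}^{\pi_\phi^b}(\mu) - V_{M,\gamma}^{\pi_b}(\mu) = \frac{1}{1-\gamma}\,\mathbb{E}_{s\sim d_M^{\pi_\phi^b}}\!\left[\sum_a \bigl(\pi_\phi^b(a|s)-\pi_b(a|s)\bigr)Q_M^{\pi_b}(s,a)\right],
\]
then bound the inner sum by $\|\pi_\phi^b(\cdot|s)-\pi_b(\cdot|s)\|_1\,\|Q_M^{\pi_b}\|_\infty$, use Pinsker's inequality $\|\pi_\phi^b(\cdot|s)-\pi_b(\cdot|s)\|_1 \le \sqrt{2\,\KL(\pi_\phi^b(s),\pi_b(s))} \le \sqrt{2\varepsilon_\pi}$ uniformly in $s$, and finally observe that $\|Q_M^{\pi_b}\|_\infty \le 1/(1-\gamma)$ since $r\in[0,1]$. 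Combining yields
\[
\bigl|V_{M,\gamma}^{\pi_\phi^b}(\mu) - V_{M,\gamma}^{\pi_b}(\mu)\bigr| \le \frac{\sqrt{2\varepsilon_\pi}}{(1-\gamma)^2},
\]
which matches the first term in the stated bound.

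Putting the two estimates together and using $-|x|-|y| \le x+y$ gives
\[
V_{M,\gamma}^{\pi_\phi^b}(\mu) \ge V_{M_b,\gamma}^{\pi_b}(\mu) - \frac{\sqrt{2\varepsilon_\pi}}{(1-\gamma)^2} - \frac{\gamma\sqrt{2\varepsilon_0}}{1-\gamma}\,\|V_{M_b,\gamma}^{\pi_b}\|_\infty,
\]
and factoring $\sqrt{2}/(1-\gamma)$ yields exactly the claimed inequality. The only subtle point is that the policy KL assumption is in the direction $\KL(\pi_\phi^b,\pi_b)$, which is the ``correct'' direction for Pinsker as used above; if the paper's intended direction were reversed, the same argument goes through with the roles of the two arguments in Pinsker swapped, and this is the only place the proof would need minor adjustment.
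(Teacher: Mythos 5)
Your proof is correct and follows the same decomposition as the paper: you insert the intermediate value $V_{M,\gamma}^{\pi_b}(\mu)$, bound the model-mismatch term identically by applying \cref{lemma:value_model_difference} with $\pi=\pi_b$ and the kernels $(P_0,P)$, and combine via the triangle inequality. The only place you diverge is the policy-mismatch term: the paper obtains $|V_{M,\gamma}^{\pi_\phi^b}(\mu)-V_{M,\gamma}^{\pi_b}(\mu)|\le \sqrt{2\varepsilon_\pi}/(1-\gamma)^2$ by citing \cite[Lemma B.3]{janner2019trust} together with Pinsker's inequality, whereas you re-derive the same bound from first principles via the Kakade--Langford performance difference lemma, H\"older's inequality, Pinsker, and $\|Q_M^{\pi_b}\|_\infty\le 1/(1-\gamma)$, which is exactly where the assumption $r\in[0,1]$ enters. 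Both routes give identical constants; yours buys a self-contained argument that makes the role of the bounded-reward assumption explicit, while the paper's is shorter by deferring that step to an external result. Your closing remark about the KL direction is also sound: Pinsker upper-bounds the symmetric total-variation distance by either order of the KL arguments, so the assumption $\max_s \KL(\pi_\phi^b(s),\pi_b(s))\le\varepsilon_\pi$ suffices exactly as written.
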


Lemma \ref{lemma:performance_bound} tells us that if the two policies are similar enough, the performance of the corrected policy in $M$ is comparable to that of the baseline in $M_b$, if the two environments are not too different. This result motivates learning a correction term to the baseline policy in an environment in which the baseline performs suboptimally. Initializing the correction term to $0$ at the beginning of training encourages similarity of the corrected policy and the baseline. In the next section, we empirically demonstrate the strength of this approach. 

%Then, the discounted average reward of $\pi_\phi = \alpha f_\theta + (1-\alpha) \pi_b$ on $M$ is lower bounded by
% \begin{equation}
%     V_{M,\gamma}^{\pi_\phi}(s) \geq \alpha V_{M,\alpha\gamma}^{f_\theta}(s) + (1-\alpha) \left(\eta_{\mu,(1-\alpha)\gamma} - \sqrt{2\varepsilon} \frac{(1-\alpha)\gamma}{1-(1-\alpha)\gamma} \right)
% \end{equation}
% where $V_{M,\gamma}^{\pi_\phi}(s)$ is the discounted reward of $\pi_\phi$ on $M$ with discount factor $\gamma$ (similarly for the other terms).

\section{Experiments}\label{sec:experiments}

%In this section, we empirically evaluate our method on a realistic telecommunication network problem of coverage and capacity optimization via antenna tilt control. 

In this section, we empirically evaluate our proposed model-based RL method by applying it to a realistic telecommunication network problem. Specifically, we focus on optimizing coverage and capacity through the control of antenna tilt angles."

%We examined several questions: whether MBRPL was generally more sample efficient than existing methods, whether it could maintain a strong initial performance, and whether both the residual and model based components contribute to a performance improvement.
We examined several questions: (1) whether MBRPL was generally more sample-efficient than existing methods; (2) whether it could maintain a strong initial performance; and (3) whether both the residual and model-based components contribute to a performance improvement.

%We make a comparison against several well-known model-free baselines, and perform an ablation study. We omitted existing expert-based methods for the sake of space as they have already been shown to be significantly outperformed by one of our baseline (DQN) in previous works~\cite{safe_tilt, coord_RL, safety_shield_tilt}.
%In addition, we study how the critic burn-in affects the initial performance of MBRPL.
We compared MBRPL against several well-known model-free baselines and performed an ablation study. We omitted existing expert-based methods from this comparison, as they have already been shown to be significantly outperformed by one of our baselines (DQN) in previous works~\cite{safe_tilt, coord_RL, safety_shield_tilt}. Additionally, we investigated how the critic burn-in affects MBRPL's initial performance.

\subsection{Compared Methods}

We compared the sample efficiency and performance at convergence of MBRPL against several baselines. SAC is a model-free state-of-the-art RL algorithm~\cite{SAC_v1, SAC_v2}. DQN \cite{dqn} is a well-known discrete action-space model-free method used in several previous works on RL for antenna tilt control which has been shown to outperform non-RL baselines \cite{safe_tilt,coord_RL, safety_shield_tilt}. For DQN, the action space is changed to update the tilt by discrete increments of $\{\SI{-1}{\degree},\SI{0}{\degree}, \SI{1}{\degree} \}$ compared to the other methods performing continuous increments.
We also compared to ablations of our method: model-based SAC (MBSAC) and stochastic RPL (SRPL). MBSAC uses only the model-based part of the algorithm to train a SAC agent. SRPL learns a residual policy to a stochastic baseline policy using SAC.
Finally, we study the effect of the critic burn-in parameter on MBRPL. This parameter controls how many steps the policy is frozen at the baseline policy while the critic trains.

\paragraph{Baseline policy.} As baseline policies for MBRPL and SRPL we experimented with two versions, both SAC agents trained in modified versions of the environment (MDP $M_b$ in our theoretical analysis):
\begin{itemize}
    \item An environment where all buildings were removed. Buildings affect how signals propagate~\cite{axcel}.
    \item An environment where the intersite distance between base-stations was reduced to 400 meters. A shorter distance intuits that a larger down-tilt is needed to optimize performance. This baseline has a worse performance than the one above in the training environment. 
\end{itemize}
The baseline policies resulting from our approach outperformed a random policy, but exhibited suboptimal performance within the true environment. These policies operated within the same observation and action space as our agent. Although the baseline trained with SAC is a stochastic policy, we treated it deterministically by relying solely on $\mu_b$, that is $a_t=\mu_b(s_t)$. Following this initial training, the baseline policy was not subject to further refinement. It's worth noting that other methods, such as classical control or rule-based techniques, could also be employed to generate these baseline policies.

Finally, the transition model used by MBRPL and MBSAC is a neural network that outputs the mean and variances of a Gaussian distribution. We found out that a single model was  sufficient to learn an efficient policy in the antenna environment, rather than using an ensemble as in MBPO~\cite{janner2019trust}. %, trained with the MSE loss between the true and predicted next state. 
%We found that using this approach led to better predictions than maximizing the log-likelihood.
%Alternatively, one could have tried using an ensemble of models as in MBPO, but we found out that a single model was  enough for learning a performant policy in the antenna environment. 
The details of the hyperparameters can be found in \cref{tab:hyperparameters} or in our technical report. For the MBSAC baseline, we used the same model training hyperparameters as in MBRPL.

\subsection{Simulation and Training}

The environment was simulated using a proprietary system level mobile network simulator relying on a map-based propagation model to compute the signal received by each user \cite{axcel}. The network was built as a hexagonal grid of 7 base stations with 3 antennas each (\num{21} agents), with parameters corresponding to the standardized 3GPP case 1~\cite{3gpp.36.814}, with \SI{500}{meter} intersite distance. 1000 static users were uniformly distributed across the environment, see \cref{fig:antenna_env} (left). The environment is a 50-50 split of indoor and outdoor environment, with buildings placed uniformly at random across the map, which consists of \num{5000} discrete square bins. Buildings affects how the signal propagates between the antenna and a user as detailed in \cite{axcel}. They are excluded from the figure to prevent cluttering. 

We let each method train for \num{10000} steps across $5$ random seeds. The positions of the users and buildings were uniformly randomly generated at the start of each episode, and the tilt of the antennas were initialized uniformly at random within the allowed range. 
For one step of the environment, we collect a transition sample from all \num{21} antennas and add them to the replay buffer.
We began by tuning the hyperparameters of the benchmark algorithms (SAC, DQN), and then used the same parameters for MBRPL and the ablations. Our methods require some additional settings, such as choice of baseline policy, critic burn-in and prediction horizon. Details of the hyperparameters can be found in \cref{tab:hyperparameters}.

\subsection{Numerical Results}

\begin{figure*}
    \centering
    \input{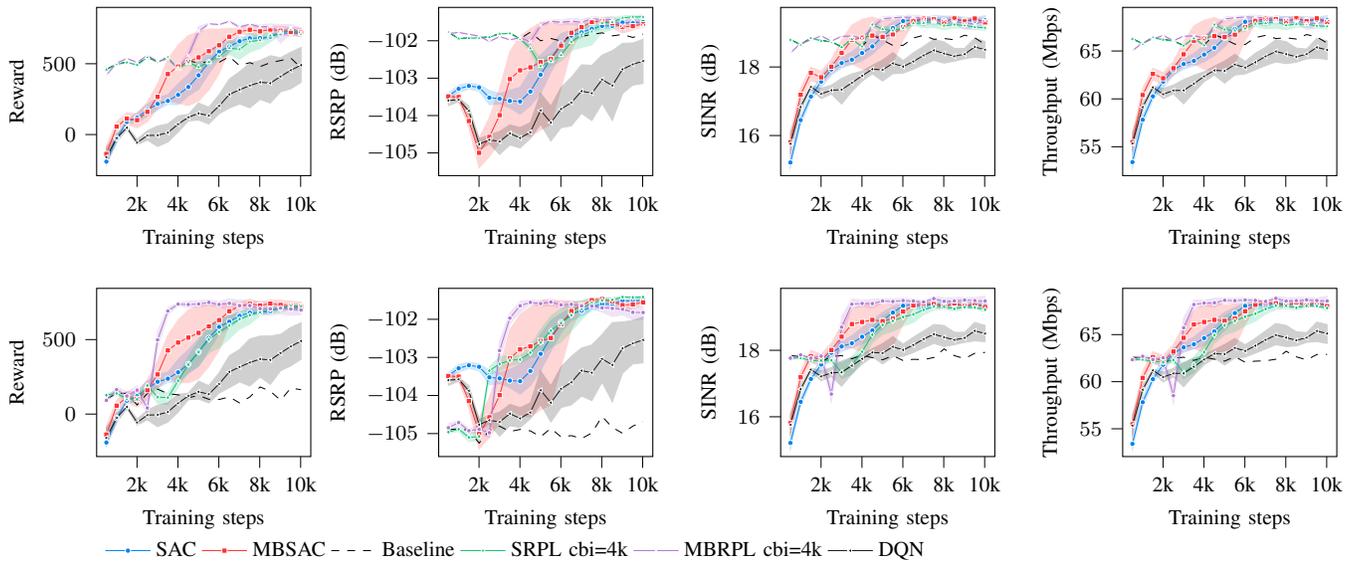}
    \caption{MBRPL training performance compared to the benchmarks SAC, DQN, and the ablations MBSAC and SRPL. The agents are trained in an environment with buildings and an intersite distance of \SI{500}{\meter}. In the top row, we use a baseline policy trained in a completely outdoor environment, in the bottom row, a baseline policy trained with \SI{400}{\meter}. The baseline policy is a fixed (non-learning based) policy. The shaded area represents the \SI{95}{\percent} confidence interval.}
    \label{fig:mbrpl-comp-both-baselines}
\end{figure*}

\begin{figure}
    \centering
    \input{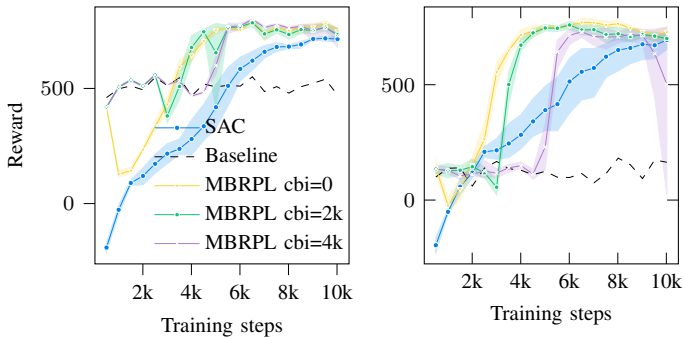}
    \caption{The effect of burn-in on MBRPL. SAC is also plotted for comparison. (left) expert trained in an outdoor environment. (right) expert trained in environment with intersite distance \SI{400}{\meter}.The shaded area represents the \SI{95}{\percent} confidence interval.}
    \label{fig:cbi-mbrpl-both-baselines}
\end{figure}
 A comparison of MBRPL to the benchmarks and the ablations can be seen in \cref{fig:mbrpl-comp-both-baselines}. We show the collected reward, RSRP, SINR and throughput during training for each method.

In the top row, MBRPL and SRPL were trained using the outdoor baseline policy. Here MBRPL is the most sample efficient by far, converging at around \num{5500} steps in terms of reward. MBSAC converges at around \num{7500} steps, and SAC and SRPL at around \num{9000} steps. DQN did not converge before the step limit was reached. A similar trend is observed for the RSRP plot. In terms of SINR and throughput, SAC and SRPL converge faster than they did in terms of RSRP, rivaling MBSAC. MBRPL consistently converges the fastest at around \num{5500} steps.

The baseline policy reached an average reward of $682 \pm 57$ (1 std) in the environment without buildings, on which it was trained. In the evaluation environment of \cref{fig:mbrpl-comp-both-baselines} (top row) it collected an average reward of $508 \pm 26$ (1 std). 

In the bottom row of figure \cref{fig:mbrpl-comp-both-baselines}, MBRPL and SRPL were trained using the baseline with a shorter intersite distance. Here we observe similar results: MBRPL is the most sample efficient. This time it converges earlier in terms of reward: after around \num{4000} steps. 
The baseline policy reached an average reward of $1599 \pm 66$ (1 std) in the environment with shorter intersite, on which it was trained. In the true environment, it only collected an average reward of $126 \pm 33$ (1 std).
Interestingly, this baseline policy performed much worse than the baseline trained on an outdoor environment. Despite the bias from the baseline, MBRPL is still able to find a policy as good as the SAC policy while using less samples.

MBSAC is not significantly more sample efficient on average than SAC, in spite of using 10 times as much data to learn the policy. There is a large variance in performance across the different seeds. 
Since SAC is not having similar issues, it is reasonable to assume that the model-based component is causing it, and may be due to poor model accuracy. Epistemic uncertainty in model prediction could be addressed by using an ensemble of transition models, as in MBPO.
Both MBRPL and SRPL have a  stronger initial performance due to the baseline policy, although SRPL is not more sample efficient than SAC. This highlights the strength of the combined approach of MBRPL. The strong initial policy allows more efficient use of the learned model, increasing sample efficiency significantly.

Figure \ref{fig:cbi-mbrpl-both-baselines} shows the effect of changing the critic burn-in parameter. In the left plot, the baseline policy was trained in an environment without buildings. A low burn-in value leads to a large initial dip in performance, because of the mismatch between actor and critic performance, similar to what was observed in the original RPL paper~\cite{RPL}. MBRPL is able to recover quickly, and still converges well before SAC. Increasing the critic burn-in reduces the dip, but can delay convergence because the policy training is postponed. A similar trend is observed when using the baseline policy trained with a smaller intersite distance, see \cref{fig:cbi-mbrpl-both-baselines} (right). Here we also observe that setting the critic burn-in value to \num{2000} lead to instabilities during training for one of the seeds, resulting in worse performance at the end of training. Comparing the left and right plots show that the initial performance of the baseline policy does not prevent the algorithm from converging to a good final policy in any of the burn-in settings.

%Finally, we refer the reader to the technical report for more detailed numerical results and analyses on robotic environments (MuJoCo).

\begin{table}[h]
    \centering
\resizebox{\columnwidth}{!}{%
    \begin{tabular}{lM}
    \toprule
    \multicolumn{2}{l}{\textbf{SAC Parameters}} \\
    % \midrule
    Actor and Critic MLP   & [64, 64, 64], \text{ReLU activation}               \\ 
    Actor and Critic lr & 3 \cdot 10^{-4} \\
    Buffer size           & 10000                      \\ 
    Policy distribution   & \text{Tanh squashed Gaussian} \\ 
    $\gamma$              & 0.9                       \\ 
    $\tau$                & 5\cdot 10^{-3}                      \\ 
    Target network update & \text{Every other time step}             \\ 
    Batch size            & 128                         \\ 
    Entropy lr            & 3 \cdot 10^{-4}               \\ 
    $\alpha_0$              & 1                      \\ 
    Target entropy        & -1    \\ 
    \midrule
    \multicolumn{2}{l}{\textbf{SRPL Parameters (same as above plus below)}} \\
    % \midrule 
    $B_{in}$ & \text{0, 2k, or 4k} \\
    baseline & \text{outdoor agent or 400m ISD agent} \\
    \midrule
    \multicolumn{2}{l}{\textbf{MBRPL Parameters (same as above plus below)}} \\
    Transition model & [64, 64, 64], \text{ReLU activation} \\
    Model lr & 10^{-3} \\
    H & 10 \\ 
    rollout batch size & 128 \\
    \midrule 
    \multicolumn{2}{l}{\textbf{Simulation parameters}} \\
    Antenna height & \SI{32}{\meter} \\
    Antenna model & \text{HV 3gpp 36.814} \\
    Max Tx power & \SI{40}{\watt} \\
    Frequency & \SI{2}{\giga\hertz} \\ 
    \bottomrule
    \end{tabular}
}   
    \caption{Hyperparameters for MBRPL and its ablations, and other relevant simulation parameters.}\label{tab:hyperparameters}
    \end{table}

\section{Conclusion}\label{sec:conclusion}
% quick summary of the approach and the finding (1 paragraph)
% future directions (1 short paragraph~2 sentences)
In this work, we have presented a model-based RL method which learns a residual correction term to a baseline policy. 
%Our method outperformed a state-of-the-art model-based method in terms of sample efficiency on most investigated robotic control benchmarks.
Our method proved effective for optimizing coverage and capacity on an antenna tuning problem. Model-based RL had not previously been tested on this problem, and our method outperformed all benchmarks.
Ablation studies show that combining the usage of a baseline policy with a model-based approach leads to higher sample efficiency. A hyperparameter study indicates that the higher initial performance of the baseline policy can be maintained by setting the appropriate critic burn-in, however, further testing is needed to confirm this result. Our results hint that relying on an existing suboptimal controller, paired with a model-based approach, is a viable approach for deploying intelligent control algorithms in real-world applications. A limitation of our work is that we simplified the multi-agent nature of the problem through parameter sharing. Adding coordination mechanisms from related work to model-based algorithms would be an interesting future direction. In this paper, we have focused on LTE networks (4G), but we believe that our results would also apply for future generation mobile networks. An interesting future research project would therefore be to apply this algorithm to tilt control for 5G urban macro. Other future research directions could involve model residual learning when approximate models are available, as well as extending RPL to discrete action spaces.

\printbibliography

\newpage
\onecolumn
\appendix
\section*{Appendix A. }
This appendix contains the proof of lemma \ref{lemma:value_model_difference}.
\begin{proof}[Proof of Lemma \ref{lemma:value_model_difference}]
 For the sake of notation, let $V_0 = V_{M_0}^\pi$ and $V_1 = V_{M_1}^\pi$. Further, define $\Delta V(s) = V_{0}(s) - V_{1}(s)$, $\Delta p(s'|s,a) = p_0(s'|s,a)-p_1(s'|s,a)$.
\begin{align*}
    \Delta V(s) &= \mathbb{E}_{a\sim\pi(s)} \Big[ \gamma \mathbb{E}_{s_0'\sim p_0(s,a)}[V_0(s_0')]-\gamma \mathbb{E}_{s_1'\sim p_1(s,a)}[V_1(s_1')]\Big],\\
    &= \gamma  \mathbb{E}_{a\sim\pi(s)} \Big[\mathbb{E}_{s_0'\sim p_0(s,a)}[V_0(s_0')] - \mathbb{E}_{s_1'\sim p_1(s,a)}[V_1(s_1')\pm V_0(s_1')]\Big],\\
    &= \gamma \mathbb{E}_{a\sim\pi(s)} \Big[\underbrace{\left(\mathbb{E}_{s_0'\sim p_0(s,a)}[V_0(s_0')]-\mathbb{E}_{s_1'\sim p_1(s,a)}[V_0(s_1')]\right)}_{(a)} + \underbrace{\mathbb{E}_{s_1'\sim p_1(s,a)}[\Delta V(s_1')]}_{(b)}\Big].
\end{align*}
By expanding $(b)$ recursively, we find that
\begin{align*}
\Delta V(s) &= \frac{\gamma}{1-\gamma}\mathbb{E}_{z\sim \mu_1^\pi(s), a\sim \pi(z)}\Big[\mathbb{E}_{s_0'\sim p_0(z,a)}[V_0(s_0')]-\mathbb{E}_{s_1'\sim p_1(z,a)}[V_0(s_1')] \Big],
\end{align*}
where $\mu_1^\pi(s)$ is the discounted policy distribution induced by $\pi$ starting in $s$ in model $M_1$.
By Pinsker’s inequality we bound (a) as follows:
\begin{align*}
&\left|\mathbb{E}_{s_0'\sim p_0(s,a)}[V_0^\pi(s_0')]-\mathbb{E}_{s_1'\sim p_1(s,a)}[V_0^\pi(s_1')]\right|\leq \|p_0(s,a)-p_1(s,a)\|_1 \|V_0\|_\infty.
\end{align*}
Consequently, we have
$
\left|\Delta V(s)\right| \leq  \frac{\gamma\sqrt{2}}{1-\gamma} \|V_0\|_\infty\mathbb{E}_{z\sim \mu_1^\pi(s), a\sim \pi(z)}\left[ \sqrt{\KL(p_0(z,a), p_1(z,a))}\right].
$
\end{proof}

\begin{proof}[Proof of Lemma 2]
Note that $|V_{M}^{\pi_\phi}(\mu) - V_{M_b}^{\pi_b}(\mu)| =  |V_{M}^{\pi_\phi}(\mu) - V_{M_b}^{\pi_b}(\mu) \pm V_{M}^{\pi_b}(\mu)| \leq  |V_{M}^{\pi_\phi}(\mu) - V_{M}^{\pi_b}(\mu)| + |V_{M_b}^{\pi_b}(\mu)- V_{M}^{\pi_b}(\mu)|$. The second term can be bounded using Lemma \ref{lemma:value_model_difference} as $|V_{M_b}^{\pi_b}(\mu)- V_{M}^{\pi_b}(\mu)| \leq \sqrt{2\varepsilon_0}\frac{\gamma}{1-\gamma} \|V_{M_b}^{\pi_b}\|_\infty$. The first term can be bound as in \cite[Lemma B.3]{janner2019trust}, i.e., $|V_{M}^{\pi_\phi}(\mu) - V_{M}^{\pi_b}(\mu)| \leq \sqrt{2\varepsilon_\pi} \frac{1}{(1-\gamma)^2}$, where we also made use of Pinsker's inequality. We conclude that
$
|V_{M}^{\pi_\phi}(\mu) - V_{M_b}^{\pi_b}(\mu)| \leq   \frac{\sqrt{2 \varepsilon_\pi}}{(1-\gamma)^2} +\sqrt{2\varepsilon_0}\frac{ \gamma}{1-\gamma} \|V_{M_b}^{\pi_b}\|_\infty,
$ from which the result follows.
\end{proof}

\section*{Appendix B.}

This section provides details on the model and policy training steps that we used for MBRPL in the antenna tilt problem. During training, these two steps were performed for each step in the real environment. See \cref{alg:mbrpl} for more details.

\subsection*{B.1. Model learning step}

We trained a single model using a mean squared error supervised loss. Firstly, under the Markov assumption:
\begin{align}
   p(s_{0:T} \mid a_{0:T-1}) = \rho(s_0) \prod_{t=1}^{T-1} p(s_{t+1} \mid s_t, a_t).
\end{align}
Furthermore, we assume that $p_{\theta}(s_{t+1} \mid s_t, a_t)$ is normally distributed:
\begin{equation}
    p_{\theta}(\cdot \mid s_t, a_t) = \mathcal{N}(\mu_{\theta}(s_t, a_t), \Sigma_{\theta}(s_t, a_t)),
\end{equation}
and we use a dense neural network parameterized by $\theta$ to output the mean $\mu_{\theta}$ and the diagonal co-variance $\Sigma_{\theta}$. The loss is calculated using a MSE between a batch of the normalized \textit{true} states $\mathbf{\hat{s}}$ and the mean of 10 batches of the normalized \textit{generated} states $\mathbf{\Bar{s}}_{gen}$:
\begin{equation}
    L_{\theta} = \frac{1}{B\cdot d}\sum_{b=0}^{B-1} \mid\mid \mathbf{\hat{s}}^{(b)} - \mathbf{\Bar{s}}_{gen}^{(b)} \mid\mid_2^2 
\end{equation}
Here, $\mathbf{\hat{s}}^{(b)}$ indicates the $b$-th state vector in the batch, and similarly for $\mathbf{\Bar{s}}_{gen}^{(b)}$. The states are re-scaled by element-wise division with $\mathbf{s}_{max} - \mathbf{s}_{min}$, which is the difference between the upper and lower bounds of the state-space. Also note that $d$ is the dimension of the state-space and $B$ the batch size. The parameter $\theta$ is then learned via gradient descent using the re-sampling trick on the generated data. One model learning step is performed on one batch of data for each step in the true environment.

\subsection*{B.2. Policy learning step}

The residual policy is trained using generated data from the learned transition model $p_\theta(s' \mid s,a)$. $H$ actor-critic updates are made per sample batch from the buffer of real experiences. We used the actor-critic loss functions of the SAC method \cite{SAC_v2}. Pseudocode for a single policy update step can be seen in \cref{alg:policy_learning_step}. One such learning step is performed on a batch of data sampled from the buffer for each step in the real environment.

\begin{algorithm}[H]
\caption{Policy learning step}
\begin{algorithmic}[1]
\label{alg:policy_learning_step}

\REQUIRE Batches $(\mathbf{s}_t, \mathbf{a}_t, \mathbf{r}_t, \mathbf{s}_{t+1})$, transition model $p_{\theta}$, baseline policy $\pi_b$, residual policy $\pi_{\phi}^b$, critic $Q_{\psi}$, reward function $r$

\FOR{$\tau$ from $t$ to $t + H$}

    \IF{$\tau > t$}
        \STATE Sample action batch $\mathbf{a}_{\tau}$ using  $\pi_{\phi}^b(\mathbf{s}_{\tau}) $
        \STATE $\mathbf{s}_{\tau + 1} \sim p_{\theta}(\mathbf{s}' \mid  \mathbf{s}_{\tau}, \mathbf{a}_{\tau})$
        \algorithmiccomment{Predict next state batch}
        \STATE $\mathbf{r}_{\tau} = r(\mathbf{s}_{\tau}, \mathbf{a}_{\tau}, \mathbf{s}_{\tau + 1})$
    \ENDIF
        
    \STATE Perform actor-critic learning step on batches ($\mathbf{s}_{\tau}, \mathbf{a}_{\tau}, \mathbf{r}_{\tau}, \mathbf{s}_{\tau + 1}$)\

\ENDFOR

\end{algorithmic}

\end{algorithm}

\section*{Appendix C. Robotic control}

We examine the sample efficiency and performance at convergence of MBRPL on several control tasks simulated with the MuJoCo physics engine. The problems we study are Hopper, Walker (\cite{deepmind_ctrl}), and Ant with truncated observations. 
%These are common benchmarks for continuous action-space model-based RL methods~\cite{SAC_v2, MBPO, dreamer}.

\subsection*{C.1. Experimental setup}
\begin{figure*}[!ht]
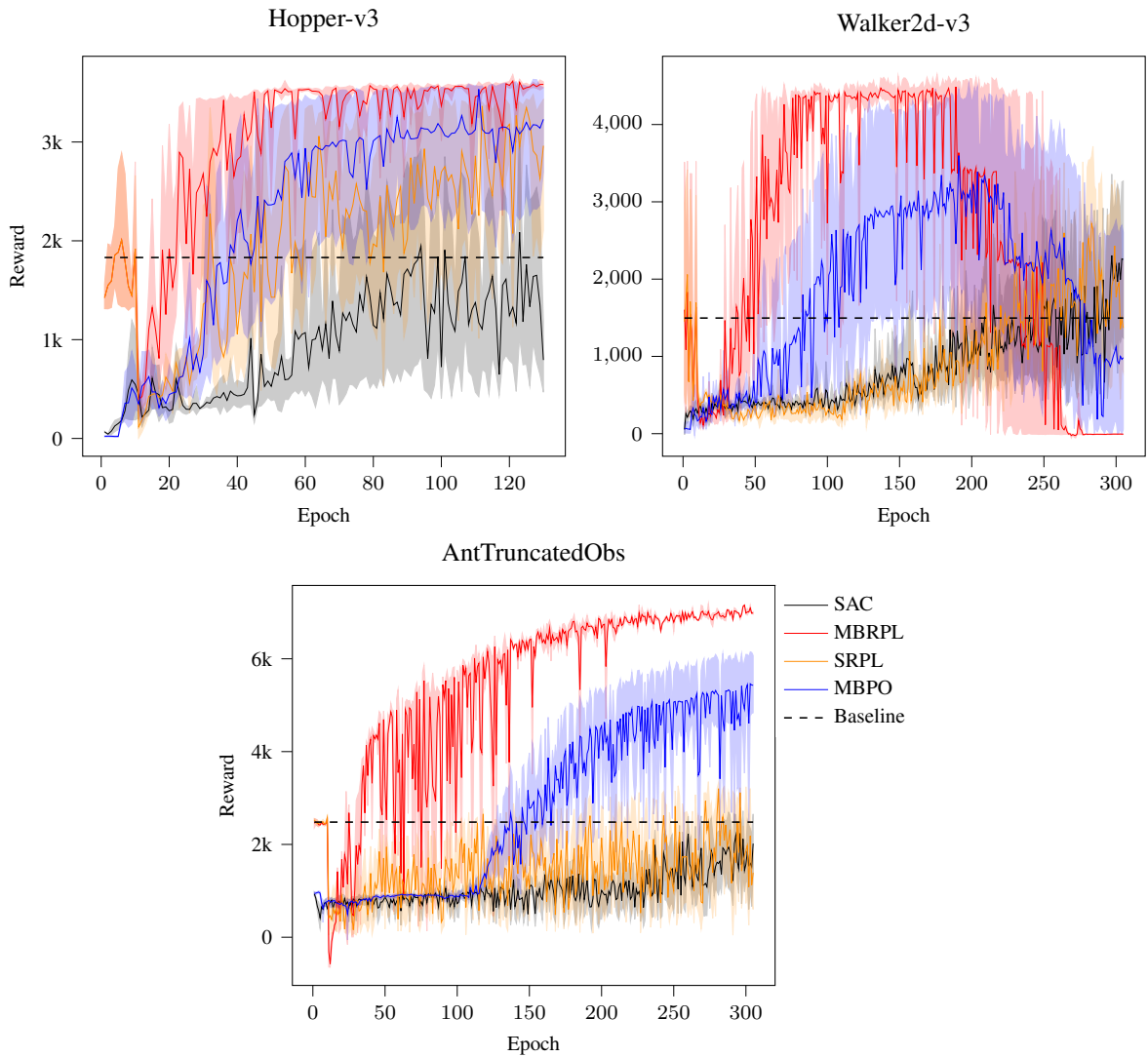

    \centering
    \input{figures/mujoco_exp/hopper_baselines_comparison}
    \input{figures/mujoco_exp/walker_baselines_comparison}
    \input{figures/mujoco_exp/ant_baselines_comparison}
    \caption{Comparison of MBRPL against the benchmark SAC, and the ablation RSAC on three MuJoCo tasks. MBPO plays both the role of benchmark and ablation.}
    \label{fig:all_comp_mujoco}
\end{figure*}
We compare against MBPO, SAC and SRPL. MBPO and SAC represent the state-of-the-art within model-based and model-free methods, respectively. SRPL is an ablation of our method (see the section on the antenna tilt problem for more information). 

\paragraph{Baseline policy.} The baseline policy is a SAC agent trained on a version of the environment with lower gravity. It is used by MBRPL and SRPL. We combined the residual policy with the mean of the baseline policy.

\paragraph{Model-based approach.}Initially, we used a simplified model-based approach to train the residual policy (see the antenna experiment section for more details). We found that the model was unable to learn an accurate enough representation of the environment, resulting in a collapse of the training process. Hence, we decided to use MBPO to train the residual correction term in MBRPL. The transition model in MBPO uses an ensemble of neural networks, which reduces epistemic uncertainty caused by the random initialization of the model weights. In addition, it is trained on all the so-far collected environment data until the validation loss increases each $k$:th step. On the antenna tilt environment, we only trained on one batch of data each time step, leading to far fewer model updates. The complexity of the MuJoCo tasks requires putting more effort on the model learning. Using an ensemble trained on all the current environment data yielded models of higher quality.
\paragraph{Training.} We trained the model-based methods for 130 epochs on Hopper, and 300 epochs on Walker and Truncated Ant. SAC was trained for 1000 and 3000 epochs, respectively. Each epoch consisted of 1000 environment steps. Each experiment was repeated 5 times. We used the MBPO and SAC implementations from MBRL-Lib~\cite{mbrllib}. On Hopper, we used the official MBPO parameters. On Truncated Ant, we had to use the parameters from the MBRL-Lib repository to achieve a similar performance as in the MBPO paper. In the Walker environment, the MBRL-lib parameters gave the better result, but led to a significantly lower maximum reward ($25\%$ lower) than the official MBPO implementation, and a decline after around 200 epochs. Our method, MBRPL, used the same parameters as those used for MBPO. It also requires some additional hyperparameters, such as choice of baseline policy and critic burn-in. The parameters can be found in the appendix. 
% MBRL MBPO got a 10% lower reward on Hopper than official

\subsection*{C.2. Results}

A comparison of MBRPL to the baselines can be seen in \cref{fig:all_comp_mujoco}. MBRPL was the most sample efficient method on most tasks. The difference  in efficiency is especially striking in the Truncated Ant environment, where MBRPL reached a reward of 7000 in 300 epochs, compared to the 5400 of MBPO.
MBRPL again outperforms MBPO in the Hopper environment, but the difference in reward when MBRPL converges is smaller. It appears as if it converges to a $\approx 10\%$ higher reward, and does so in a third of the epochs. 
In the Walker environment, MBRPL initially vastly outperforms MBPO in terms of sample efficiency. However, the two model-based methods show a decline in performance after around 200 epochs. A possible explanation is a catastrophic forgetting of the policy, or the loss being unstable, as was observed in the Humanoid environment in the MBRL-lib paper.

Another important thing to note is the  dip in performance of MBRPL and SRPL during the early stages of the training. It is possible that a larger critic burn-in could have prevented this dip in performance, as was found for the antenna tilt problem. The performance of the baselines in the training and true environment can be seen in \cref{tab:baseline_mujoco}.

Since the only difference between MBPO and this version of MBRPL is the use of the baseline policy, this must be the cause of the increased sample efficiency. By guiding the policy into higher reward regions earlier, it is able to accelerate learning. Interestingly, the model-based component of MBRPL also plays a crucial role - the model-free ablation RSAC is noticeably slower to converge than MBPO. The difference is the most significant on Truncated Ant. 

\begin{table}[h]
\centering
\begin{tabular}{lSSS}
\toprule
Task                       & \text{Hopper-v3} & \text{Walker2d-v3} & \text{AntTruncatedObs} \\ 
\midrule
Trained with $g$:            & \SI{7}{\meter\per\second\squared}  & \SI{6}{\meter\per\second\squared}    & \SI{4}{\meter\per\second\squared} \\ 
Reward in training env     & 3505 \pm 30 & 3150 \pm 1546 & 4793 \pm 1431 \\ 
Reward with \SI{9.81}{\meter\per\second\squared} & 1831 \pm 780 & 1496 \pm 1533 & 2481 \pm 70 \\
\bottomrule
\end{tabular}
\caption{The mean and 1 standard deviation of the reward collected by the baseline policies used for each task in the MuJoCo environment.}
\label{tab:baseline_mujoco}
\end{table}

\section*{Appendix D.}

This appendix contains the hyperparameters of the algorithms for all the experiments. 

\subsection*{D.1. Antenna environment}

\begin{minipage}{0.45\textwidth}
\begin{table}[H]
\centering
%\resizebox{\columnwidth}{!}{%
    \begin{tabular}{lM}
    \toprule
    \textbf{Parameter}    & \textbf{Value}                  \\ 
    \midrule
    Actor hidden layers   & [64, 64, 64]               \\ 
    Critic hidden layers  & [64, 64, 64]               \\ 
    Actor lr              & 3 \cdot 10^{-4}                    \\ 
    Critic lr             & 3 \cdot 10^{-4}                        \\ 
    Buffer size           & 10000                      \\ 
    Policy distribution   & \text{Tanh squashed Gaussian} \\ 
    Exploration type   & \text{Sample from policy} \\ 
    Random time steps      & 0                      \\ 
    Learning starts time step       & 100                        \\ 
    $\gamma$              & 0.9                       \\ 
    $\tau$                & 5\cdot 10^{-3}                      \\ 
    Target network update & \text{Every other time step}             \\ 
    Batch size            & 128                         \\ 
    Dueling Q             & \text{No}                         \\ 
    Entropy lr            & 3 \cdot 10^{-4}               \\ 
    $\alpha_0$              & 1                      \\ 
    Target entropy        & -1    \\ 
    Metric smoothing      & 5 \text{ episodes}              \\ 
    Time steps per epoch   & 500                        \\ 
    Number of epochs      & 20                          \\ 
    \bottomrule
    \end{tabular}
%}
    \caption{Hyperparameters for SAC in the antenna     environment.}\label{tab:sac_params_antenna}
\end{table}
\end{minipage}\hfill
\begin{minipage}{0.45\textwidth}
\begin{table}[H]
\centering
\begin{tabular}{lM}
\toprule
\textbf{Parameter}    & \textbf{Value}            \\ 
\midrule
Actor hidden layers   & [64, 64, 64]               \\ 
Critic hidden layers  & {[}64, 64, 64{]}               \\ 
Actor lr          & 3 \cdot 10^{-4}                    \\ 
Critic lr            & 3 \cdot 10^{-4}                        \\ 
Buffer size           & 10000                      \\ 
Policy distribution   & \text{Deterministic}              \\ 
Exploration type            & \text{Epsilon-greedy}                   \\ 
Exploration $\epsilon_0$      & 1                        \\ 
Exploration $\epsilon_{final}$& 0.01                     \\ 
Exploration decay time steps & 3000                       \\ 
Random time steps      & 0                       \\ 
Learning starts time step       & 100                        \\ 
$\gamma$              & 0.9                       \\ 
$\tau$                & 5 \cdot 10^{-3}                        \\ 
Batch size            & 128                         \\ 
Dueling Q             & No                         \\ 
Metric smoothing      & 5 \text{ episodes}            \\ 
Time steps per epoch   & 500                        \\ 
Number of epochs      & 20                          \\ 
\bottomrule
\end{tabular}
\caption{Hyperparameters used for DQN in the antenna environment.}\label{tab:dqn_params_antenna}
\end{table}

\end{minipage}

\begin{table}[H]
\centering
%\resizebox{\columnwidth}{!}{%
\begin{tabular}{lM}
\toprule
\textbf{Parameter}             & \textbf{Value}  \\ 
\midrule
Transition model hidden layers & [64, 64, 64]   \\ 
Model lr                       & 10^{-3}          \\ 
Rollout length                 & 10             \\ 
Model batch size              & 128 \\
Model rollouts               & 1280 \\
\bottomrule
\end{tabular}
%}
\caption{Additional hyperparameters used for the model-based component in the antenna environment.}\label{tab:model_based_params_antenna}
\end{table}

\subsection*{D.2. MuJoCo environments}
The parameters used by MBPO, SAC and MBRPL for Hopper, Walker and Truncated Ant can be seen below. The corrected policy $\pi_{\phi}^b$ was created by adding the mean of the baseline policy's output .

\begin{minipage}{0.45\textwidth}
\begin{table}[H]
\centering
\begin{tabular}{ll}
\toprule
\textbf{Parameter}         & \textbf{Value}        \\ 
\midrule
Training steps             & $125000$          \\ 
Epoch length               & $1000$                 \\ 
Initial exploration steps  & $5000$                 \\ 
Model lr                   & $10^{-3}$       \\ 
Model L2                   & $10^{-5}$       \\ 
Model batch size           & $256$                  \\ 
Model validation ratio    & $0.2$                  \\ 
Model hidden layers       & $[200, 200, 200, 200]$  \\ 
Ensemble size              & 7                    \\ 
Freq. train model           & $250$                  \\ 
Model rollouts per steps   & $400$                  \\ 
Rollout schedule           & $[20, 150, 1, 15]$     \\ 
SAC updates per step       & $20$                   \\ 
SAC $\gamma$               & $0.99$                 \\ 
SAC $\tau$                 & $5\cdot 10^{-3}$       \\ 
SAC $\alpha_0$             & $1$                  \\ 
SAC policy                 & Gaussian               \\ 
SAC target update interval & $1$                   \\ 
SAC auto entropy tuning    & True                  \\ 
SAC hidden layers          & $[256, 256]$         \\ 
SAC lr                     & $3\cdot10^{-4}$              \\ 
SAC batch size             & $256$                  \\ 
\bottomrule
\end{tabular}
\caption{Hyperparameters on Hopper.}
\end{table}
\end{minipage}\hfill
\begin{minipage}{0.45\textwidth}
\begin{table}[H]
%\resizebox{\columnwidth}{!}{%
\begin{tabular}{ll}
\toprule
\textbf{Parameter}         & \textbf{Value}        \\ 
\midrule
Training steps             & $3\cdot 10^5$          \\ 
Epoch length               & $1000$                 \\ 
Initial exploration steps  & $5000$                 \\ 
Model lr                   & $10^{-3}$       \\ 
Model L2                   & $10^{-5}$       \\ 
Model batch size           & $256$                  \\ 
Model validation ratio    & $0.2$                  \\ 
Model hidden layers       & $[200, 200, 200, 200]$ \\ 
Ensemble size              & $7$                    \\ 
Freq. train model           & $250$                  \\ 
Model rollouts per steps   & $400$                  \\ 
Rollout schedule           & $[20, 150, 1, 1]$     \\ 
SAC updates per step       & $20$                   \\ 
SAC $\gamma$               & $0.99$                 \\ 
SAC $\tau$                 & $5\cdot 10^{-3}$       \\ 
SAC $\alpha_0$             & $0.2$                  \\ 
SAC policy                 & Gaussian               \\ 
SAC target update interval & $4$                    \\ 
SAC auto entropy tuning    & False                  \\ 
SAC hidden layers          & $[1024, 1024]$         \\ 
SAC lr                     & $10^{-4}$              \\ 
SAC batch size             & $256$                  \\ 
\bottomrule
\end{tabular}
%}
\caption{Hyperparameters on Walker2d-v3}
\end{table}
\end{minipage}

\begin{table}[H]
\centering
%\resizebox{\columnwidth}{!}{%
\begin{tabular}{ll}
\toprule
\textbf{Parameter}         & \textbf{Value}        \\ 
\midrule
Training steps             & $3\cdot 10^5$          \\ 
Epoch length               & $1000$                 \\ 
Initial exploration steps  & $5000$                 \\ 
Model lr                   & $3\cdot 10^{-4}$       \\ 
Model L2                   & $5\cdot 10^{-5}$       \\ 
Model batch size           & $256$                  \\ 
Model validation ratio    & $0.2$                  \\ 
Model hidden layers      & $[200, 200, 200, 200]$ \\ 
Ensemble size              & $7$                    \\ 
Freq. train model           & $250$                  \\ 
Model rollouts per steps   & $400$                  \\ 
Rollout schedule           & $[20, 100, 1, 25]$     \\ 
SAC updates per step       & $20$                   \\ 
SAC $\gamma$               & $0.99$                 \\ 
SAC $\tau$                 & $5\cdot 10^{-3}$       \\ 
SAC $\alpha_0$             & $0.2$                  \\ 
SAC policy                 & Gaussian               \\ 
SAC target update interval & $4$                    \\ 
SAC auto entropy tuning    & False                  \\ 
SAC hidden layers          & $[1024, 1024]$         \\ 
SAC lr                     & $10^{-4}$              \\ 
SAC batch size             & $256$                  \\ 
\bottomrule
\end{tabular}
%}
\caption{Hyperparameters on truncated ant.}
\end{table}\label{sec:appendix}

\end{document}